\documentclass{article}


\usepackage{log_2024}




\usepackage[utf8]{inputenc} 
\usepackage[T1]{fontenc}    
\usepackage{url}            
\usepackage{booktabs}       
\usepackage{amsfonts}       
\usepackage{nicefrac}       
\usepackage{microtype}      
\usepackage{xcolor}         
\usepackage{subfigure}
\usepackage{mathtools}

\usepackage{amsmath}
\usepackage{amssymb}
\usepackage{mathtools}
\usepackage{amsthm}
\usepackage[numbers, square]{natbib}
\usepackage{xspace}
\usepackage{algorithm}
\usepackage{algpseudocode}

\algnewcommand\algorithmicforeach{\textbf{for each}}
\algdef{S}[FOR]{ForEach}[1]{\algorithmicforeach\ #1\ \algorithmicdo}


\newcommand{\sgn}{\operatorname{sgn}}

\newcommand{\mlp}{\operatorname{m}}

\newcommand{\1}{\mathbf{1}}
\newcommand{\I}{\mathbf{I}}

\newcommand{\R}{\operatorname{\mathbb{R}}}
\newcommand{\N}{\operatorname{\mathbb{N}}}

\newcommand{\bA}{\operatorname{\mathbf{A}}}

\newcommand{\bD}{\operatorname{\mathbf{D}}}
\newcommand{\bH}{\operatorname{\mathbf{H}}}

\newcommand{\bP}{\operatorname{\mathbf{P}}}

\newcommand{\bS}{\operatorname{\mathbf{S}}}

\newcommand{\bX}{\operatorname{\mathbf{X}}}

\newcommand{\ba}{\operatorname{\mathbf{a}}}

\newcommand{\bd}{\operatorname{\mathbf{d}}}

\newcommand{\bh}{\operatorname{\mathbf{h}}}
\newcommand{\bp}{\operatorname{\mathbf{p}}}

\newcommand{\bs}{\operatorname{\mathbf{s}}}

\newcommand{\bx}{\operatorname{\mathbf{x}}}
\newcommand{\by}{\operatorname{\mathbf{y}}}

\newcommand{\low}{\text{A}}
\newcommand{\band}{\text{C}}

\newtheorem{thm}{Theorem}
\newtheorem*{thm*}{Theorem}

\newtheorem*{lem*}{Lemma}

\theoremstyle{definition}
\newtheorem{defn}{Definition}
\newtheorem*{defn*}{Definition}

\newtheorem*{ex*}{Example}

\newcommand{\method}{GCON\xspace}
\title{Towards a General Recipe for Combinatorial Optimization with Multi-Filter GNNs}

%

\author[F. Wenkel et al.]{%
  Frederik Wenkel\thanks{Equal contribution.}
    \\
    Dept. of Mathematics \& Statistics\\
    Université de Montréal\\
    Mila – Quebec AI Institute\\
  \texttt{frederik.wenkel@umontreal.ca} \\
  \And
  Semih Cantürk\footnotemark[1]
    \\
    Dept. of Computer Science and Operations Research\\
    Université de Montréal\\
    Mila – Quebec AI Institute\\
  \texttt{semih.canturk@umontreal.ca} \\
    \And
  Stefan Horoi
    \\
    Dept. of Mathematics \& Statistics\\
    Université de Montréal\\
    Mila – Quebec AI Institute\\
  \And
  Michael Perlmutter\thanks{Equal senior author contributions.}
    \\
  Dept. of Mathematics\\
  Boise State University\\
  \And
  Guy Wolf\footnotemark[2]
    \\
    Dept. of Mathematics \& Statistics\\
    Université de Montréal\\
    Mila – Quebec AI Institute\\
}

\begin{document}

\maketitle
\setcounter{footnote}{0}

\begin{abstract}
Graph neural networks (GNNs) have achieved great success for a variety of tasks such as node classification, graph classification, and link prediction. However, the use of GNNs (and machine learning more generally) to solve combinatorial optimization (CO) problems is much less explored. Here, we introduce \method, a novel GNN architecture that leverages a complex filter bank and localized attention mechanisms to solve CO problems on graphs. We show how our method differentiates itself from prior GNN-based CO solvers and how it can be effectively applied to the maximum cut, minimum dominating set, and maximum clique problems in a unsupervised learning setting. \method is competitive across all tasks and consistently outperforms other specialized GNN-based approaches, and is on par with the powerful Gurobi solver on the max-cut problem. 
We provide an open-source implementation of our work\footnote{\url{https://github.com/WenkelF/copt}}.
\end{abstract}

\section{Introduction}
\label{sec:intro}
Recent years have seen the rapid development of neural networks for graph-structured data \citep{wu2022graph,abadal2021computing}. Indeed, graph neural networks (GNNs) achieve state-of-the-art performance for tasks such as node classification and are incorporated in industrial applications such as helping power Google Maps~\citep{derrow2021eta} and Amazon's recommender system \citep{Wang2022}. Popular message-passing neural networks (MPNNs) typically consider a graph together with a collection of features associated with each node~\citep{kipf2016semi,xu2018powerful,velickovic2017graph}. In each layer, they first perform a local aggregation which effectively smooths the features over neighboring nodes of the graph, and then perform a transformation which learns combinations of the smoothed node features. 

Here, we consider leveraging GNNs to solve combinatorial optimization (CO) problems, a somewhat less explored yet emerging application area of graph learning. In particular, we focus on finding (i) the maximum clique, i.e., the largest fully connected subgraph, (ii) the minimum dominating set, i.e., the smallest subset of the vertices which ``dominates" the graph in the sense that every vertex is within at most one hop of the dominating set, and (iii) the maximum cut, i.e., a partition of the vertices into two clusters with as many cross-cluster edges as possible.

Notably, these tasks differ significantly from, e.g., node classification in several important ways. The aggregations used in common message passing neural networks are localized averaging operations that can be interpreted, from the perspective of graph signal processing (GSP) \citep{ortega2018graph,shuman2013emerging}, as low-pass filtering. In this manner, they effectively treat smoothness as an inductive bias. While this is a useful heuristic on homophilic social networks such as the ubiquitous Cora, CiteSeer, and PubMed datasets~\citep{bojchevski2018deep}, we find that this assumption is not quite appropriate for CO problems. Moreover, in CO problems, it is not reasonable to assume the presence of informative node features. Instead, algorithms are expected to learn directly from the network geometry (taking initial inputs to be constant or features derived from the graph structure, e.g., vertex degree). We also note that the CO problems we are interested in are typically NP-hard. Therefore, it is computationally intractable to obtain sufficient labeled data for supervised learning. As we shall see below, our approach tackles various CO problems with a fully unsupervised framework, using task-specific loss functions.

To address these challenges, we propose the \textcolor{black}{Graph Combinatorial Optimization Network} (\method), a novel GNN that uses a hybrid filter bank consisting of both (i) aggregation operations such as those used in common MPNNs, as well as (ii) comparison operations, which will take the form of band-pass, wavelet filters and aim to capture more intricate aspects of the network geometry. In order to balance the importance of these different filters, we use a localized attention mechanism which may choose to lend more importance to different filters on a node-by-node basis. 
We compare \method to various competing methods, such as heuristics, commercial solvers, other specialized GNN-based approaches~\citep{karalias2020erdos, min2022can}, and recent advances that apply GFlowNets (GFNs) to the above mentioned problems~\citep{zhang2023let}. 
Our main contributions are: 

\begin{enumerate}
    \item We develop \method, a novel GNN architecture with a sophisticated filter bank and a localized attention mechanism that is particularly suited for solving CO problems.
    \item We evaluate \method on a variety of graph-CO problems on synthetic graph benchmarks, and demonstrate that our architecture is highly performant.
    \item We prove a theorem that helps us interpret the improved performance of our method.
\end{enumerate}

\section{Background and Related Work}
\label{sec:related}

Many well-known graph-CO problems are either NP-hard or even NP-complete \citep{karp72reducibility}, rendering it impossible to efficiently generate exact solutions for large graphs. Instead, one may hope to efficiently generate \emph{approximate solutions}. Almost forty years after \citet{Hopfield1985} first proposed applying deep neural networks to CO problems, there has been a recent revival in achieving this goal via GNNs. Indeed, GNNs are a natural candidate due to their success on a wide variety of tasks such as node classification, graph classification, and link prediction \citep{abadal2021computing,wu2022graph}; a comprehensive overview can be found in \citet{co_gnns}. Researchers have explored ways to solve graph-CO problems using both supervised and unsupervised (in particular self-supervised) methods as well as reinforcement learning (RL). Unfortunately, supervised methods are often computationally infeasible due to the cost of labeling for NP-hard problems. RL-based solutions, on the other hand, may run into problems arising from the large size of the state space. We thus view self-supervised learning as the most promising family of approaches. 

Notably, \citet{tonshoff2019run, karalias2020erdos, min2022can} have all used GNNs to solve CO problems in a unsupervised manner through self-supervision. \citet{tonshoff2019run} solves binary constraint satisfaction problems using message-passing with recurrent states. Subsequently, \citet{karalias2020erdos} proposed a seminal self-supervised learning framework in which a GNN outputs a Bernoulli distribution over the vertex set of the input graph, and a surrogate loss function designed to encourage certain properties in the distribution (e.g., vertices that are assigned high probabilities form a clique) and softly enforce constraints in a differentiable manner. They then propose specific loss instantiations to tackle maximum clique and graph partitioning problems. A similar loss is derived in \citet{min2022can}, who use an architecture based on the graph scattering transform \citep{gao2019geometric} to obtain promising results for the maximum clique problem. A notable difficulty associated with these methods is that the proposed self-supervised loss functions are highly non-convex. This motivated a separate line of work by \citet{sun2022annealed} which designed an annealed optimization framework that aims to optimize GNNs effectively for such loss functions. We also note the recent work by \citet{zhang2023let}, which effectively applied generative flow networks~\citep{gflownets} to solve CO problems in an iterative manner.
Our work is most closely related to \citet{karalias2020erdos} and \citet{min2022can}, relying on the same or closely related self-supervised loss functions, while improving on the learned part of the pipeline, leveraging a more powerful hybrid GNN architecture.

Finally, \method is related to several lines of work on building neural solvers that can perform well across graph-CO problems. \citet{anycsp} and \citet{boisvertcsp} propose building unified representations for constraint satisfaction problems (CSP), \citet{berto2024routefinderfoundationmodelsvehicle} and \citet{multitask_vrp} similarly propose unified methods for vehicle routing problem (VRP) variants, while \citet{drakulic2024goalgeneralistcombinatorialoptimization} proposes a modular architecture for multi-task learning over a variety of CO problems.

\section{Problem Setup}
\label{sec:problem_setup}
The purpose of this paper is to develop methods for (approximately) solving combinatorial optimization problems on graphs in a scalable and computationally efficient manner. 

We let $G=(V, E)$ denote a simple, connected, undirected graph. We assume an arbitrary (but fixed) ordering on the vertices $V=\{v_1,\ldots,v_n\}$. In a slight abuse of notation, if $\bx$ is a function defined on the vertices $V$, we do not distinguish between $\bx$ and the vector defined by $x_i=\bx[v_i]$. We let $\bA\in\R^{n\times n}$ be the adjacency matrix of $G$, let $\bD$ be the corresponding diagonal degree matrix, $\bD=\text{diag}(\bA\1)$, and let $\bP=\frac{1}{2}(\I+\bA\bD^{-1})$ denote the transition matrix of a lazy random walk (a Markov chain in which, at each step, the walker either stays put or moves to a neighboring vertex, each with probability one half). Finally, we let $\bX\in\R^{n\times d}$ denote a matrix of input node features.

We consider the following CO problems in this work, but also note that our framework could be extended to a large variety of graph-CO problems such as vertex cover, maximal independent set or graph coloring by constructing appropriate supervised or self-supervised loss functions:
\begin{enumerate}
    \item \textbf{Maximum Cut:} Find a partition of the vertex set $V=S\sqcup T$ that maximizes the number of edges between $S$ and $T$, i.e., $|\{\{v_i,v_j\}\in E: v_i\in S, v_j\in T\}|$.
    \item \textbf{Maximum Clique:} Find a fully connected subgraph (a clique) with as many vertices as possible.
    \item \textbf{Minimum Dominating Set:} A subset of the vertices $S\subseteq V$ is called a \emph{dominating set} if for every vertex $v_i$ is either an element of $S$ or a neighbor of $S$, in the sense that $\{v_i,v_j\}\in E$ for some $v_j\in S$. Find a dominating set with as few vertices as possible.
\end{enumerate}

\section{Methodology}
\label{sec:method}

We treat each of the CO problems as an unsupervised learning problem. We first compute basic node statistics to serve as input features to the graph. We then use our GNN, described in detail in Section~\ref{sec:model}, to generate a function/vector $\bp$ whose $i$-th entry $\bp_i=\bp[v_i]$ is interpreted as the probability that $v_i$ is in the set of interest. We apply a sigmoid in its final layer ensuring that all entries are between $0$ and $1$. Finally, we use a \emph{rule-based decoder} to construct a candidate solution by subsequently adding nodes ordered by decreasing probability, based on $\bp$. Our methodology is illustrated in Fig.~\ref{fig:abs_fig}. In the following subsections, we will explain how to develop suitable loss functions and decoders for each problem of interest.

\subsection{Maximum Cut}
\label{sec:mcut}
In this problem, we aim to find the maximum cut $V=S\sqcup T,$ and without loss of generality, we choose $p_i$ to be the probability that $v_i\in S$. We then define $\by \coloneqq 2\bp - \1_n$ (where $\1_n\in\mathbb{R}^n$ is the vector of all 1's) so that the entries of $\by$  lie in $[-1,1]$ (rather than $[0,1]$) and use the loss function
\begin{equation*}
\textstyle
    L(\by) \coloneqq \frac{1}{2}\by^\top \bA \by = \sum_{\{v_i,v_j\}\in E} y_i y_j.
\end{equation*}
After learning $\by$, we then set $S$ to be the set of vertices where $\by$ is greater than or equal to zero and $T$ to be the set of vertices where $\by$ is negative.
Intuitively, we note that every edge that connects two nodes in different parts of the cut, i.e., $\sgn(\by_i) = - \sgn(\by_j)$, decreases the loss, while the opposite case yields an additive term that increases the loss. For example, if $\bp=\1_S$ (where $\1_S$ is the indicator function on $S\subset V$, i.e, $\1_S[v]=1$ if $v\in S$, and zero otherwise), we have $\by=\1_S-\1_T$, and our loss function is equal to the number of cuts minus the number of intra-cluster edges. Thus, the loss function encourages a node labeling that cuts as many edges as possible. 

\subsection{Maximum Clique}
\label{sec:mc}
For the maximum clique, we let $\overline{\bA}$ denote the adjacency matrix of the complement graph $\overline{G}=(V,E^c)$, where $E^c$ is the set of all $\{\{v_i,v_j\}\notin E,$ $i\neq j$\} and use the two-part loss function considered in ~\citet{min2022can} (see also~\citet{karalias2020erdos}): 
\begin{equation*}
      L(\bp) \coloneqq L_1(\bp) + \beta L_2(\bp) = - \bp^\top {\bA} \bp + \beta \bp^\top \overline{\bA} \bp.
\end{equation*}

To understand this loss function, consider the idealized case where $\bp=\1_S$ is the indicator function of some set $S$. In that case, one may  verify that 
$$
-L_1(\1_S)=\1_S^\top\bA\1_S=\sum_{i,j}\bA[i,j]\1_S[v_i]\1_S[v_j]
$$
is the number of edges between elements of $S$. 
Similarly, we see have that 
$$
L_2(\1_S)=\1_S^\top\overline{\bA}\1_S=\sum_{i,j}\overline{\bA}[i,j]\1_S[v_i]\1_S[v_j]
$$
is the number of ``missing edges'' within $S$, i.e., the number of $\{v_i,v_j\}\in E^c,$ $v_i,v_j\in S$.
Therefore, in minimizing $L(\bp)$, we aim to find a subset with as many connections, and as few missing connections, as possible. 
The hyper-parameter $\beta$ is used to balance the contributions of $L_1$ and $L_2$.

After computing $\bp$, we then reorder the vertices $\{v_i\}_{i=1}^n\rightarrow\{v_i'\}_{i=1}^n$ so in the new ordering $\bp(v_i')\geq \bp(v_{i+1}')$. We then build an initial clique by initializing $C^{(1)}=\{v_1'\}$ and iterating over $i$. At each step, if $C^{(1)}\cup \{v_i'\}$ forms a clique, we add $v_i'$ to $C^{(1)}$, otherwise we do nothing. Note that $C^{(1)}$ is guaranteed to be a clique by construction.

We then repeat this process multiple times to construct additional cliques $C^{(2)},\ldots, C^{(K)}$.  However, on the $k$-th iteration, we initialize with $C^{(k)}=\{v_k'\}$ and automatically exclude $v_1',\ldots, v_{k-1}'$ from $C^{(k)}$. We then choose the maximal clique $C^*$ to be the $C^{(k)}$ with the largest cardinality. We note that increasing $K$ does increase the computational cost of our decoder; however, running the decoder with different initializations $\{v_k'\}$ can be parallelized in a straightforward manner. 

\subsection{Minimum Dominating Set}
\label{sec:mds}
For the minimum dominating set problem, we use the loss function: 
\begin{equation*}
L(\bp)\coloneqq L_1(\bp)+\beta L_2(\bp)\coloneqq \Vert\bp\Vert_1 + \beta \sum_{v_i\in V} \exp \left(\sum_{v_j \in \mathcal{N}_{\underline{v_i}}} \log(1-\bp[v_j])\right),
\end{equation*}
where $\mathcal{N}_{\underline{v}}$ denotes the set of nodes within distance one of $v$ (including $v$ itself).
The first loss term promotes the sparsity of the dominating set.
To understand the second term, observe as soon as there is at least one $v_j\in \mathcal{N}_{\underline{v_i}}$ with $\bp[v_j]\approx 1$ we will have $\exp\left(\sum_{v_j \in \underline{\mathcal{N}_{v_i}}} \log(1-\bp[v_j])\right) \approx 0$.  Indeed, we note that, if $\bp=\1_S$ is the indicator function of a dominating set $S$, we have that 
$\exp\left(\sum_{v_j \in \mathcal{N}_{\underline{v_i}}} \log(1-\bp[v_j])\right)=\exp(-\infty)=0,$
for all $v_i$. Therefore, $L(\1_S)=\|\1_S\|_1$ would be exactly the cardinality of $S$.

After learning $\bp$, we construct our proposed minimum dominating set in a manner analogous to the max-clique case. We again begin by reordering the vertices so that $\bp(v_i')\geq \bp(v_{i+1}')$. We then initialize our first dominating set $S^{(1)}=\{v_1'\}$ and iterate over $i$. At each step, we check if $S$ is a dominating set. If so, we stop. Otherwise, we add $v_{i+1}'$ to $S^{(1)}$ and repeat the process. Similarly to the maximum clique, we repeat this process several times, where we initialize $S^{(k)}=\{v_k'\}$ (automatically excluding $v_1',\ldots,v_{k-1'}$ from the set $S^{(k)}$ considered on the $k$-th iteration) and take $S^*$ to be the smallest dominating set $S^{(k)}$ found in any iteration.

\begin{figure}[t]
\vspace{-10pt}
    \centering
    \includegraphics[width=\textwidth]{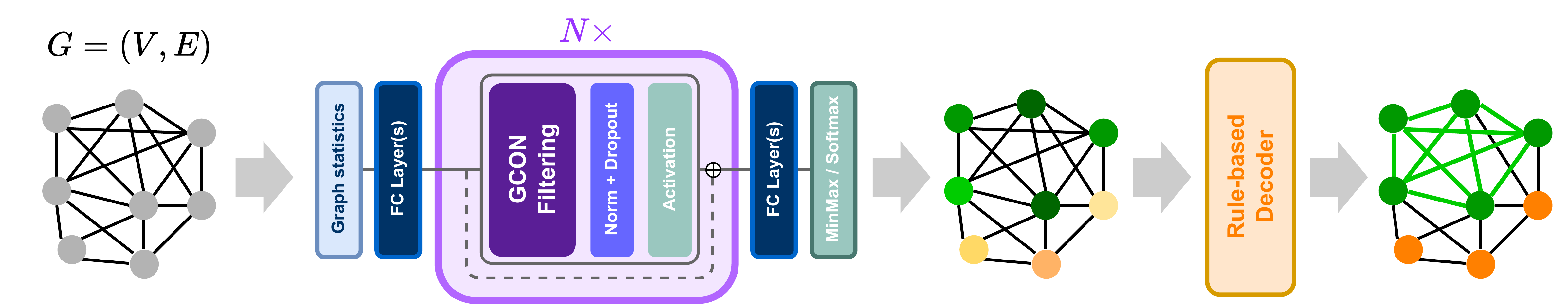}
    \caption{Illustration of our general framework. Graphs are first equipped with node features derived from graph statistics and then passed through a GNN composed of \method hybrid layer blocks (detail in Fig.~\ref{fig:new}) with a min-max or softmax output layer. A rule-based decoder then processes the node-level outputs to determine the set of interest.}
    \label{fig:abs_fig}
\end{figure}

\vspace{-10pt}
\section{The \method Architecture}
\label{sec:model}
In this section, we introduce the \textcolor{black}{Graph Combinatorial Optimization Network} (\method). A notable feature of our architecture is that it utilizes several different types of filtering operations in each layer.
It includes localized aggregation operations similar to those used in common message-passing neural networks (which may be interpreted as low-pass filters from a GSP perspective). We also include comparison operations that extract multi-scale geometric information and consider changes across the different scales. These comparison operations will take the form of wavelets, based on those utilized in the geometric scattering transform \citep{gao2019geometric,gama2018diffusion,zou2020graph}, and can be interpreted as band-pass filters again from a GSP perspective. The different filters of each type will be combined using a localized attention mechanism that allows the network to focus on different filters at each node, which makes our architecture fundamentally different from many traditional GNN methods that apply the same aggregation at each node of the graph. 

\begin{figure}[t]
    \vspace{-10pt}
    \subfigure[]{\label{fig:new}
        \hspace{-4pt}\includegraphics[height=0.22\textheight]{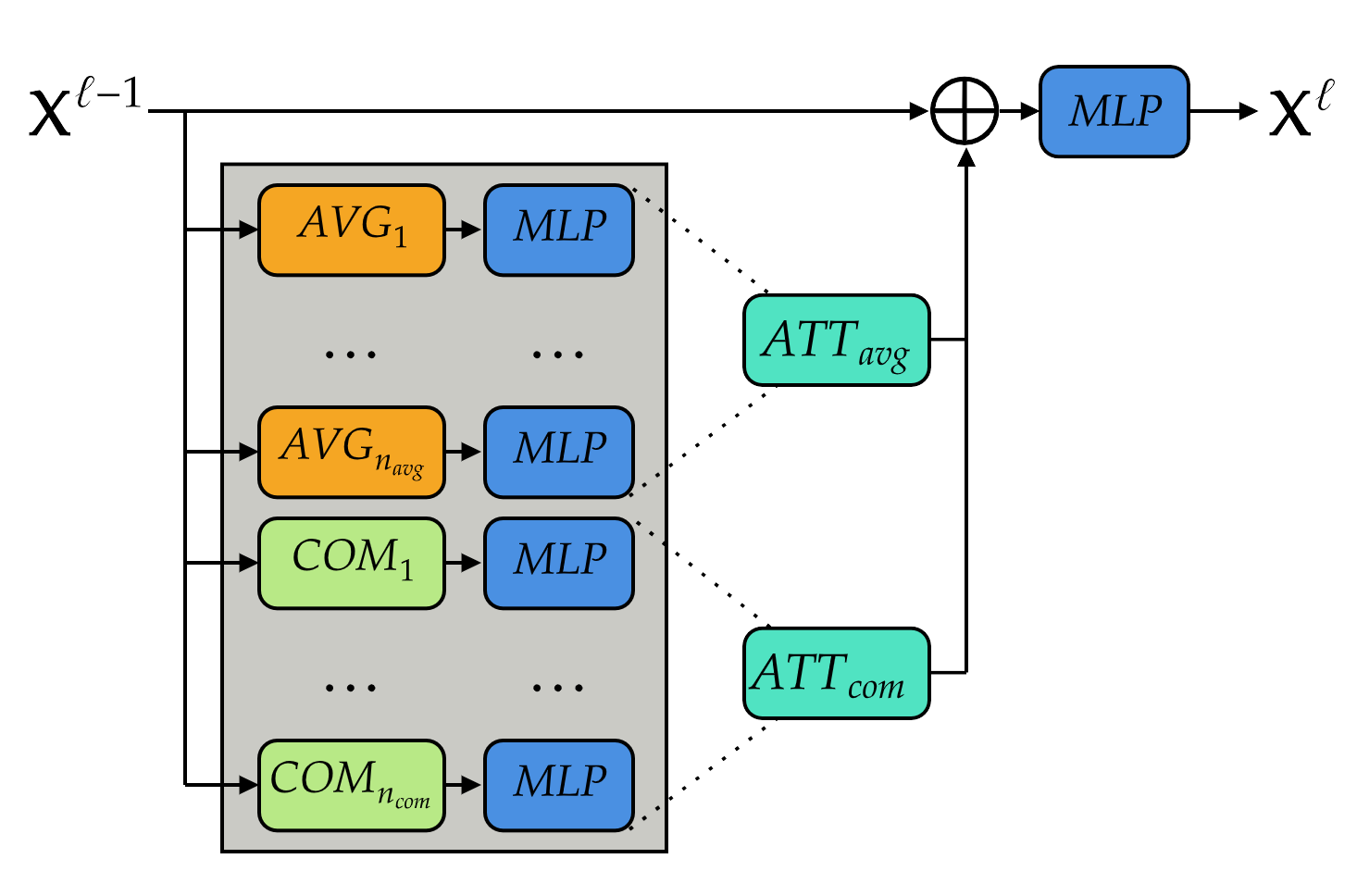}
    }
    \subfigure[]{\label{fig:old}
        \hspace{2pt}\includegraphics[height=0.22\textheight]{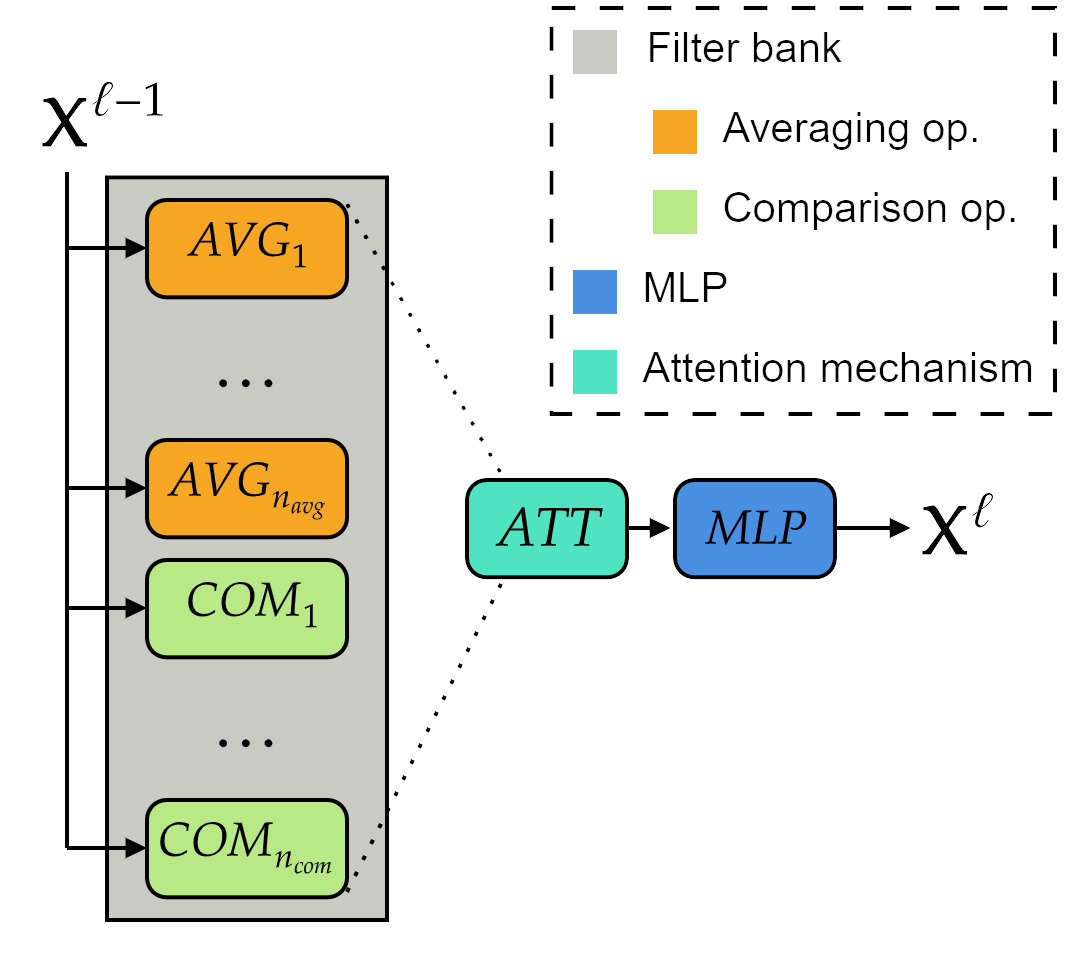}
    }
    \caption{Illustration of the layer-wise update for (a) the new decoupled \method filter bank and (b) the ScatteringClique filter bank from \citet{min2022can}.}
\end{figure}
\subsection{The \method filter bank: Aggregation and Comparison operations}
The most basic component of our filter bank is a traditional one-step aggregation somewhat similar to popular message passing neural networks (e.g., ~\citet{kipf2016semi, hamilton2017inductive}). It has the form
    $F_1(\bX) \coloneqq \mlp\left( \bP \bX \right)$,
where $\bP=\frac{1}{2}(\I+\bA\bD^{-1})$ is the lazy random walk matrix.\footnote{One could readily replace $\bP$ with generic graph shift operators $\bS\in\R^{n\times n}$, i.e., a matrix where $\bS[i,j]=0$ unless $\{v_i,v_j\}\in E$ or $i=j$. Likewise with the more complex filters $F_k(\bX)$ and $F_{k_1,k_2}(\bX)$ defined below.} The function $\mlp \coloneqq \mlp_{\sigma, \theta}$ can be either a linear layer or multi-layer perceptron (MLP) with activation function $\sigma$ and learned weights $\theta$.
At each node, this process averages information from all direct neighbors of the node, followed by a transformation step that learns new cross-feature combinations. Networks utilizing this setup are commonly referred to as \emph{aggregate-transform} GNNs \citep{xu2018powerful} due to the alternation of \emph{aggregations} $\bX\rightarrow \text{AGG}(\bX)$ and \emph{transformations} of node features $\text{AGG}(\bX)\rightarrow\mlp(\text{AGG}(\bX))$. Additionally, we also consider higher-order aggregations of the form
\begin{equation}\label{eq:low-k}
    F_k(\bX) \coloneqq \mlp\left(\bP^k \bX \right),\quad k\geq 1.
\end{equation}
We will refer to the filters $F_k$ as \emph{aggregation operations}.

Next, we add filters inspired by diffusion wavelets~\citep{coifman2006diffusion} that calculate the difference of two different aggregations $F_{k_1}, F_{k_2}$, where $k_1\neq k_2$, by setting
\begin{equation}\label{eq:band}
    F_{k_1,k_2}(\bX) 
    \coloneqq \mlp\left( (\bP^{k_1} - \bP^{k_2}) \bX \right).
\end{equation}
Filters of this form are fundamentally different from $F_k(\bX)$ discussed above. From the GSP perspective, $F_{k_1,k_2}(\bX)$ constitute band-pass filters, whereas $F_k(\bX)$ constitute low-pass filters. 
Since the operation $\bX\rightarrow (\bP^{k_1}-\bP^{k_2})\bX$ compares aggregations at two different scales, we will refer to them as  \emph{comparison operations}. We denote the set of all our filtering operations as $\mathcal{F}=\mathcal{F}_\low\sqcup\mathcal{F}_\band$, where $\mathcal{F}_\low$ consists of aggregation operations and $\mathcal{F}_\band$ consists of comparison operations. As our filter bank consists of two different types of filters, we will also refer to it as a hybrid filter bank.

The use of the $F_{k_1,k_2}$ is inspired by the geometric scattering transform~\citep{zou2020graph, gao2019geometric, perlmutter2023understanding}, a handcrafted multi-layer network, which iteratively filters the input node features via dyadic diffusion wavelets of the form $\bX\rightarrow (\bP^{2^{j-1}}-\bP^{2^j})\bX$
and takes vertex-wise absolute values in between wavelet filterings. Notably, the original versions of the geometric scattering transform were handcrafted feature extractors. However, subsequently, \citet{wenkel2022overcoming,tong2022learnable} used the geometric scattering transform as a basis for a fully learned graph neural network. In particular, \citet{tong2022learnable} aimed to learn the optimal diffusion scales (i.e., the powers $k_i$ to which $\bP$ is raised)  and \citet{wenkel2022overcoming} introduced a \emph{hybrid-scattering network} that utilized both aggregation operations (low-pass filters) and comparison operations (band-pass, wavelet filters).   

\subsection{Attention over filters}\label{sec: Attention over filters}
We let $\bX^{\ell-1}$ denote the input to the $\ell$-th layer and use an attention mechanism to determine the importance of information from different filters for each node. For each filter response $\bH_f\coloneqq F(\bX^{\ell-1})\in \R^{n\times d}$, we concatenate it with a matrix of  transformed input features $\bH\coloneqq \mlp(\bX^{\ell-1})$,
and apply an attention mechanism to their horizontal concatenation $[\bH \Vert \bH_f]$. This takes the form
\begin{equation*}
    \bs_f^A \coloneqq \sigma\left( [\bH \Vert \bH_f] \ba_A \right), f\in \mathcal{F}_A, \quad \text{and} \quad \bs_f^C \coloneqq \sigma\left( [\bH \Vert \bH_f] \ba_C \right), f\in \mathcal{F}_C
\end{equation*}
where $\ba_A, \ba_C\in \R^{2d}$ are learned attention vectors for aggregation and comparison filters, respectively. 
We then use softmax to normalize the importance scores across filters within $\mathcal{F}_\low$ and $\mathcal{F}_\band$ at each vertex, resulting in a normalized score $\bar\bs^\low_f\in \R^n$ and $\bar\bs^\band_f\in \R^n$ given by
\begin{equation}\label{eq:att-2-decoupled}
    \bar\bs^\low_f(v) = \text{softmax}\left(\{\bs_f[v]: f\in \mathcal{F}_\low\}\right),\quad\bar\bs^\band_f(v) = \text{softmax}\left(\{\bs_f[v]: f\in \mathcal{F}_\band\}\right).
\end{equation}
Importantly, $\bs_f^{\low}$ and $\bs_f^{\band}$ (and thus  each $\bar\bs_f^{\low}$ and $\bar\bs^\band_f$) will take different values for each node. Therefore, they act as localized attention mechanisms, which can up-weight different filters at different vertices.

Next, the filter responses are reweighted according to the attention scores, yielding aggregation and comparison representations $\bH_{\mathcal{F}_\low}, \bH_{\mathcal{F}_\band}\in\R^{n\times d}$:
\begin{equation*}
    \textstyle
    \bH_{\mathcal{F}_\low} = \sum_{f\in \mathcal{F}_\low} \left(\bar\bs_f^{\low} \circ \1_d\right) \odot \bH_f, \quad\text{and}\quad\textstyle
    \bH_{\mathcal{F}_\band} = \sum_{f\in \mathcal{F}_\band} \left(\bar\bs_f^{\band} \circ \1_d\right) \odot \bH_f,
\end{equation*}
where, $\circ$ and $\odot$ denote the outer and Hadamard product, respectively. Finally, $\bX^\ell$ is then given by  
\begin{equation}\label{eq:att-4-}
    \bX^\ell = \text{MLP}\left(\bX^{\ell-1} + \bH_{\mathcal{F}_\low} + \bH_{\mathcal{F}_\band} \right).
\end{equation}

 Our layer-wise update rule is inspired by the one utilized in \citet{min2022can}, which also used a localized attention mechanism to balance the contributions of aggregation and comparison operations (i.e., low-pass and band-pass filters).  However, our network improves upon
 \citet{min2022can} in several ways. Most importantly, it computes separate normalized attention scores $\bar\bs^\low_f(v)$ and $\bar\bs^\band_f(v)$ for the aggregation and the comparison operators, whereas \citet{min2022can} uses a single softmax for the entire filter bank $\mathcal{F}$. Additionally, \citet{min2022can} omits the learnable function $\mlp$ in Equations~\ref{eq:low-k} and~\ref{eq:band}, and uses a different layer-wise update in place of Equation~\ref{eq:att-4-}. 
For further details, see Appendix~\ref{sec: original}.

\subsection{Theoretical Analysis}\label{sec:motivation}

As noted in the previous subsection, one of the improvements we introduce relative to \citet{min2022can} is a different method for computing the normalized attention scores.
We normalize separately over $\mathcal{F}_\low$ and $\mathcal{F}_\band$, resulting in $\bar\bs^{\low}$ and $ \bar\bs^{\band}$, where \citet{min2022can} applies a single attention mechanism over the entire filter bank $\mathcal{F}=\mathcal{F}_\low\sqcup\mathcal{F}_\band$. Below, we provide a theoretical analysis that underlines the advantages of this modification.  For clarity, we refer to our method as the \emph{decoupled architecture} since it separates (i.e., decouples) the aggregation operations $\mathcal{F}_A$ and the comparison operations $\mathcal{F}_C$ when computing the normalized attention scores. We also refer to a variant of our method that computes attention over the entire filter bank together (similar to \citet{min2022can}) as the \emph{non-decoupled architecture}. 
In our analysis of the non-decoupled architecture, we also assume that the function $\mlp(.)$ is not learned (only applying a nonlinear activation as in \citet{min2022can}). 


To understand the decoupled architecture, we note that the low-pass filters $F_k$ and band-pass filters $F_{k_1,k_2}$ constitute fundamentally different operations and capture different types of information. While low-pass filters use a weighted averaging of the features of the node's neighbors, band-pass filters \emph{compare} two such averages of neighborhoods at different scales. As different types of information may be useful at different nodes, it is important that our MLP has access to both families of filterings.

However, we find that when grouped together, the outputs of the low-pass filters tend to have a larger magnitude than the band-pass filters. Therefore, the MLP is incapable of extracting band-pass information in the non-decoupled architecture. In the remainder of this section, we prove a theorem illustrating this phenomenon as a complement to our empirical evidence presented later. 
We will prove a result demonstrating that, in the non-decoupled architecture, the low-pass information will dominate the band-pass information as long as the receptive fields of the filters are sufficiently large. For simplicity, we will assume that we have a single node feature so that the input to our layer is a signal $\mathbf{x}\in\mathbb{R}^n.$ Moreover, for $f\in\mathcal{F},$ we will write $\mathbf{h}_f[v]$ to denote the response of $f$ at v.

\vspace{4pt}
\begin{defn}[Band-dominant representation]\label{def:diverse}
    Consider a graph $G=(V, E)$, a non-negative input signal $\bx\in\R^n$, and a filter bank $\mathcal{F} = \mathcal{F}_\low \sqcup \mathcal{F}_\band$. For $c>1$, we refer to the representation at a node $v\in V$ $\bx \mapsto (\bar\bs_f[v],\bh_{f}[v])_{f\in\mathcal{F}}$
    as $c$-band-dominant if 
    $$
        \max \{\bar\bs_f[v]: f\in\mathcal{F}_\band\} = c \cdot \max\{\bar\bs_f[v]: f\in \mathcal{F}_\low\}.
    $$
\end{defn}

A band-dominant representation may be interpreted as a situation where the model is paying more attention to band-pass filter information at a specific node than to low-pass filter information. We now show that, under certain assumptions, the model may nevertheless primarily use the low-pass information. We provide the proof of this statement in the Appendix~\ref{sec:proof}.

\vspace{4pt}
\begin{thm}\label{thm:motivation}
 Consider the non-decoupled architecture and a $c$-band-dominant representation $\bx \mapsto (\bar\bs_f, \bh_f)_{f\in\mathcal{F}}$ at a node $v\in V$ for a non-negative  input signal $\bx\geq 0$.
    Then, if all filters have sufficiently large scales, the low-pass filter responses will always dominate the band-pass filter responses in the sense that
\begin{equation*}
        \sum_{f\in\mathcal{F}_\band} \bar\bs_f[v] \bh_f[v] \leq \epsilon c \sum_{f\in\mathcal{F}_\low} \bar\bs_f[v]\bh_f[v],
    \end{equation*}
    where $\epsilon>0$ can be made arbitrarily small if the scales of the filters are sufficiently large.
\end{thm}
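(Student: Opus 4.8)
The plan is to control the two sides of the claimed inequality by separately bounding the band-pass filter responses $\bh_f[v]$ for $f\in\mathcal{F}_\band$ from above and the low-pass filter responses $\bh_f[v]$ for $f\in\mathcal{F}_\low$ from below, and then combining these with the $c$-band-dominance hypothesis on the attention scores. The attention-score side is essentially given: by definition of $c$-band-dominance, $\max_{f\in\mathcal{F}_\band}\bar\bs_f[v] = c\cdot\max_{f\in\mathcal{F}_\low}\bar\bs_f[v]$, and since each family of normalized scores sums to $1$ over its respective filter family, we can bound $\sum_{f\in\mathcal{F}_\band}\bar\bs_f[v]$ by $|\mathcal{F}_\band|$ times the band-max and $\sum_{f\in\mathcal{F}_\low}\bar\bs_f[v]\ge \max_{f\in\mathcal{F}_\low}\bar\bs_f[v]$ (or, more carefully, just use that at least one low-pass score equals the low-max). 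So the crux is really a statement about the raw filter responses: the band-pass responses are asymptotically negligible compared to the low-pass responses as the scales grow.

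The key analytic input is the convergence of the lazy random walk. Since $G$ is connected, $\bP=\frac12(\I+\bA\bD^{-1})$ is a lazy (hence aperiodic) irreducible stochastic-type operator, and $\bP^k\bx$ converges as $k\to\infty$ to the projection of $\bx$ onto the stationary direction — concretely, $\bP^k\bx \to \bpi(\bx)\mathbf{1}$ componentwise, where $\bpi(\bx)$ is the appropriate degree-weighted average of $\bx$, with geometric rate governed by the spectral gap $|\lambda_2(\bP)|<1$. Write $\mlp$ as applying only an activation $\sigma$ (the assumption in force for the non-decoupled architecture). Then for a low-pass filter $F_k$ with $k$ large, $\bh_{F_k}[v]=\sigma((\bP^k\bx)[v])$ is close to $\sigma(\bpi(\bx))$, a fixed positive quantity (here we use $\bx\ge 0$, $\bx\ne 0$, so $\bpi(\bx)>0$, and that $\sigma$ of a positive number is bounded below by a positive constant — e.g. ReLU or sigmoid). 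For a band-pass filter $F_{k_1,k_2}$, the argument $(\bP^{k_1}-\bP^{k_2})\bx$ tends to $\mathbf{0}$ as $\min(k_1,k_2)\to\infty$, at the same geometric rate; so $\bh_{F_{k_1,k_2}}[v]=\sigma((\bP^{k_1}-\bP^{k_2})\bx[v])$ is within $O(\lambda_2^{\min(k_1,k_2)})$ of $\sigma(0)$. One then chooses the baseline so that $\sigma(0)=0$ (true for ReLU; if $\sigma$ is the sigmoid one instead measures deviation from $\sigma(0)=1/2$ and the statement should be read with that normalization, or the activation is assumed to fix $0$), so that the band-pass responses themselves are $O(\lambda_2^{\min_k k_i})$, hence $\le\epsilon$ for all scales large enough.

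Putting the pieces together: the left-hand side is at most $\bigl(\sum_{f\in\mathcal{F}_\band}\bar\bs_f[v]\bigr)\cdot\max_{f\in\mathcal{F}_\band}\bh_f[v]\le 1\cdot\epsilon'$, while the right-hand side is at least $\bigl(\max_{f\in\mathcal{F}_\low}\bar\bs_f[v]\bigr)\cdot\min_{f\in\mathcal{F}_\low}\bh_f[v]\ge \tfrac1c\bigl(\max_{f\in\mathcal{F}_\band}\bar\bs_f[v]\bigr)\cdot m$ for a fixed constant $m=\sigma(\bpi(\bx))-o(1)>0$; after relating the two maxes via $c$-dominance and absorbing constants and cardinalities into $\epsilon$, one gets $\sum_{f\in\mathcal{F}_\band}\bar\bs_f[v]\bh_f[v]\le\epsilon c\sum_{f\in\mathcal{F}_\low}\bar\bs_f[v]\bh_f[v]$ with $\epsilon\to0$ as the scales grow. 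The main obstacle I anticipate is handling the activation function $\sigma$ cleanly: the clean statement "band-pass responses vanish" needs $\sigma(0)=0$ (or an analogous normalization), and the clean lower bound "low-pass responses are bounded below" needs $\bpi(\bx)>0$ together with $\sigma$ not killing positive inputs; I would state these as explicit hypotheses on $\sigma$ (e.g. $\sigma$ is $1$-Lipschitz with $\sigma(0)=0$ and strictly positive on $(0,\infty)$, matching ReLU/leaky-ReLU) rather than try to cover every activation. A secondary technical point is making "sufficiently large scales" uniform — it should be phrased as: for every $\epsilon>0$ there is $K$ such that whenever every filter in $\mathcal{F}$ has all its diffusion powers $\ge K$, the inequality holds — and this follows since $\mathcal{F}$ is finite so the geometric bounds are uniform over the finitely many filters.
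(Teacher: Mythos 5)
Your proposal is correct and follows essentially the same route as the paper's proof: convergence of the lazy random walk powers makes the band-pass responses at $v$ vanish while the low-pass responses approach the positive limit $p_\infty=\Vert\bx\Vert_1\bd[v]/\Vert\bd\Vert_1$, and the conclusion then follows by bounding the band-pass sum above by the band-max score times a small $\delta$ and the low-pass sum below by the low-max score $\tau/c$ times $p_\infty-\delta$, exactly as in Appendix~\ref{sec:proof}. Your explicit hypotheses on the activation ($\sigma(0)=0$, Lipschitz, positive on positive inputs, $\bx\neq 0$) are in fact more careful than the paper, which silently treats $\mlp$ as the identity; the one small imprecision --- the limit of $\bP^k\bx$ for this column-stochastic $\bP$ is proportional to the degree vector rather than a constant vector --- does not affect the argument, since only the fixed positive value at $v$ matters.
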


For adequate choices of filters, we can therefore expect the magnitude of filter responses of low-pass filters to be significantly larger than the band-pass filter responses. This leads us to the conclusion that the non-decoupled attention limits the ability of the network to leverage diverse information. 
Our \emph{decoupled} architecture addresses this shortcoming by (i) learning the functions $\mlp(\cdot)$ for each filter in Equations~\ref{eq:low-k} and~\ref{eq:band} to individually \emph{re-balance} the filter responses, and (ii) \emph{decoupling} low-pass from band-pass filters to prevent the former from dominating the latter.

\section{Experiments}
\label{sec:expts}
\noindent\textbf{Data \& Benchmarks:}
Our choice of benchmark datasets is based on those used in previous works exploring GNNs for CO problems \citep{karalias2020erdos,zhang2023let,sun2022annealed}. For the maximum clique problem, we use the RB model from \citet{rb_graphs} to generate challenging synthetic graphs as proposed by \citet{karalias2020erdos}. 
For MDS and maximum cut, we follow \citet{zhang2023let} and \citet{sun2022annealed}, generating Barabási–Albert (BA) graphs \citep{ba_graphs} on \emph{small} (200-300 vertices) and \emph{large} (800-1200 vertices) scales. 

For benchmarks, we use Gurobi \citep{gurobi}, a state-of-the-art program solver that outperforms all deep learning-based methods in \citet{karalias2020erdos} and \citet{zhang2023let}, alongside deep-learning-based solvers and two heuristic approaches: Greedy (see Appendix~\ref{sec:greedy_heuristics} for the specific algorithms) and mean-field annealing (MFA)~\citep{mfa}, an extension of simulated annealing (SA)~\citep{sa} that replaces the Markov process formulation of SA with a mean-field to guide the search procedure.

We note that while Gurobi is very accurate when given large amounts of time, on tighter time budgets it is unable to find optimal solutions \citep{karalias2020erdos}. As for deep-learning-based solvers, we compare against \citet{karalias2020erdos} (Erd\H{o}s), \citet{zhang2023let} (GFN),  \citet{min2022can} (ScatteringClique), and \citet{sun2022annealed} (Anneal). The Erd\H{o}s models and ScatteringClique represent our main benchmarks since our model \& training framework, a GNN-based encoder-decoder with customized message-passing layers, is most analogous to them. Anneal represents a different avenue in that it proposes a novel training framework with an annealed loss function while borrowing  Erd\H{o}s's architecture. For a fairer comparison, we also compare against larger variants of Erd\H{o}s and Anneal with the number of layers and hidden dimensions set to match GCON at each task; denoted Erd\H{o}s-large and Anneal-large respectively. GFN is a generative flow network~\citep{gflownets} based solver that represents the state-of-the-art deep learning method on various problems, despite being less scalable than GNN-based solvers.

As input node features to \method, 
 we compute a collection of node-level statistics; we find that this approach is superior to using one-hot encodings (akin to  Erd\H{o}s/Anneal) or random features. For BA datasets we use node degree, eccentricity, cluster coefficient, and triangle counts; we drop eccentricity on RB graphs due to its computational cost. These features are then mapped to the hidden dimension of our GNN layers with a linear layer or a shallow MLP.

\begin{table*}[!htbp]
    \centering
    \caption{Performance comparison of \method with other baselines on \emph{small} datasets. Average of three runs with standard deviation listed. The best deep learning-based method is highlighted in \textbf{bold}, and second best is \underline{underlined}. \textsuperscript{\textdagger} indicates  Gurobi outperforms all deep learning methods on a given task.}
    \scriptsize
    \begin{tabular}{lccccccccc}
\toprule
\textbf{Method}      & \textbf{Type} & \textbf{MCut size $\uparrow$} & \textbf{Time $\downarrow$} & \textbf{MClique size $\uparrow$} & \textbf{Time $\downarrow$} & \textbf{MDS size $\downarrow$} & \textbf{Time $\downarrow$}  \\
                     &               & \textbf{BA-small}             &                            & \textbf{RB-small}           &                            & \textbf{BA-small}              &                             \\ 
\midrule
GUROBI               & OR            & \phantom{\textsuperscript{\textdagger}}732.47\textsuperscript{\textdagger}                        & 13:04                      & \phantom{\textsuperscript{\textdagger}}19.05\textsuperscript{\textdagger}                       & 1:55                       & \phantom{\textsuperscript{\textdagger}}27.89\textsuperscript{\textdagger}                          & 1:47                        \\ 
\midrule
GREEDY               & H             & 684.53 ± 1.17                      & 0:13                       & 13.54 ± 0.10                     & 0:25                       & 37.39 ± 0.19                         & 2:13                        \\
MFA               & H             & 719.78 ± 1.08                       & 1:36                       & 14.81 ± 3.64                       & 0:27                       & 36.23 ± 0.13                        & 2:56                        \\
\midrule
GFN                  & SSL           & 700.23 ± 0.60                       & 2:57                       & \textbf{16.22 ± 0.06}              & 0:42                       & \textbf{29.14 ± 0.68}                 & 2:20                        \\
ANNEAL               & SSL           & 704.28 ± 1.32                      & 0:05                       & 13.06 ± 0.15                     & 3:42                       & 66.52 ± 35.1                          & 1:50                        \\
ANNEAL-large               & SSL           & 719.57 ± 1.56                      & 0:08                       & 13.27 ± 0.12                      & 3:10                       & 32.45 ± 0.49                         & 1:20                        \\
 ERD\H{O}S               & SSL-GNN       & 704.72 ± 1.39                        & 0:05                       & 13.09 ± 0.19                      & 3:42                       & 69.92 ± 40.8                         & 1:50                        \\
 ERD\H{O}S-large               & SSL-GNN       & 720.04 ± 1.23                        & 0:08                       & 13.21 ± 0.13                     & 3:10                       & 32.48 ± 0.29                          & 1:20                        \\
ScatteringClique    & SSL-GNN       & \underline{724.45 ± 1.46}        &  0:18                      & 15.80 ± 0.12                & 4:06                       &  31.07 ± 0.12                    &  1:11                       \\
\method (Ours) & SSL-GNN       & \textbf{727.09 ± 1.49}        & 0:22                       & \underline{15.87 ± 0.15}                & 4:09                       & \underline{30.26 ± 0.30}                   & 1:15                        \\
\bottomrule
    \end{tabular}
    \label{tab:method-comparison-s}
\end{table*}

\begin{table*}[!htbp]
    \centering
    \caption{Performance comparison of \method with other baselines on \emph{large} datasets. Average of three runs with standard deviation listed. The best deep learning-based method is highlighted in \textbf{bold}, and  second best is \underline{underlined}. \textsuperscript{\textdagger} indicates  Gurobi outperforms all deep learning methods on a given task.}
    \scriptsize
    \begin{tabular}{lccccccc} 
\toprule
\textbf{Method}      & \textbf{Type} & \textbf{MCut size $\uparrow$} & \textbf{Time $\downarrow$} & \textbf{MClique size $\uparrow$} & \textbf{Time $\downarrow$} & \textbf{MDS size $\downarrow$} & \textbf{Time $\downarrow$}  \\
                     &               & \textbf{BA-large}             &                            & \textbf{RB-large}           &                            & \textbf{BA-large}              &                             \\ 
\midrule
GUROBI               & OR            & 2915.29                       & 1:05:29                    & \phantom{\textsuperscript{\textdagger}}33.89\textsuperscript{\textdagger}                       & 16:40                      & \phantom{\textsuperscript{\textdagger}}103.80\textsuperscript{\textdagger}                         & 13:48                       \\ 
\midrule
GREEDY               & H             & 2781.30 ± 3.59                     & 3:07                       & 27.05 ± 0.26                       & 0:25                       & 141.95 ± 0.34                        & 35:01                       \\
MFA                  & H             & 2929.65 ± 4.14                      & 7:16                       & 28.56 ± 0.19                       & 2:19                       & 167.05 ± 1.59                       & 36:31                       \\ 
\midrule
GFN                  & SSL           & 2826.64 ± 19.1                     & 21:20                      & \textbf{31.73 ± 1.50}              & 4:50                       & \underline{113.77 ± 1.94}                & 32:12                       \\
ANNEAL               & SSL           & 2858.85 ± 1.31                     & 0:05                       & 24.86 ± 0.63                      & 4:12                       & 317.30 ± 137                         & 31:42                        \\
ANNEAL-large               & SSL           & 2878.73 ± 21.4                     & 0:08                       & 24.06 ± 0.48                       & 4:20                       & 131.89 ± 2.48                         & 14:29                        \\
ERD\H{O}S               & SSL-GNN       & 2858.72 ± 2.78                     & 0:05                       & 25.20 ± 0.14                       & 4:13                       & 274.09 ± 95.0                         & 30:28                        \\
ERD\H{O}S-large               & SSL-GNN       & 2881.48 ± 11.8                         & 0.08                       & 23.92 ± 0.66                     & 4:19                       & 134.39 ± 2.83                        & 14:29                        \\
ScatteringClique    & SSL-GNN       & \underline{2952.03 ± 4.85}        & 0:19                       & 29.36 ± 0.38                & 4:21                       &  121.26 ± 1.68                    & 14:52                       \\
\method (Ours) & SSL-GNN       & \textbf{2961.19 ± 3.58}       & 0:27                       & \underline{29.46 ± 0.51}                & 4:57                       & \textbf{113.47 ± 0.63}                  & 11:24                        \\
\bottomrule
\end{tabular}
    \label{tab:method-comparison-l}
\end{table*}

\begin{table*}[!htbp]
    \centering
    \scriptsize
    \caption{Ablation study comparing our decoupled \method layer to the non-decoupled version and common message-passing layers from the literature for the MCut problem on the BA-small dataset. Average of three runs with standard deviation listed.}
    \begin{tabular}{lccc}
        \toprule
        \textbf{Convolution} & \textbf{MCut size $\uparrow$} & \textbf{MClique size $\uparrow$} & \textbf{MDS size $\downarrow$}  \\
            & \textbf{BA-small}  & \textbf{RB-small}  & \textbf{BA-small} \\
        \midrule
        \method (Ours, decoupled)   & \textbf{727.09 ± 1.49}  & \textbf{15.87 ± 0.15}  & \textbf{30.26 ± 0.30} \\
        \method (Ours, non-decoupled)   & 725.11 ± 1.77 &    15.78 ± 0.23 & 39.17 ± 5.47 \\
        ScatteringClique     & 724.45 ± 1.46 &  15.80 ± 0.12 & 31.07 ± 0.20 \\
        GCN     & 684.24 ± 0.20 & 15.24 ± 0.16 & 48.43 ± 1.66 \\
        GIN  & 691.86 ± 0.88 & 14.48 ± 0.33 & 33.96 ± 0.71 \\
        \bottomrule
    \end{tabular}
    \label{tab:method-comparison-mp}
\end{table*}

\noindent\textbf{Results \& Analysis:}
In Tables~\ref{tab:method-comparison-s} and~\ref{tab:method-comparison-l}, we report the mean result of the objective sizes, in addition to inference times for \emph{small} and \emph{large} datasets respectively. OR refers to algorithmic solvers, H refers to heuristics while SSL refers to the self-supervised learning algorithms we compete against. SSL-GNN denotes self-supervised methods with specialized GNN layers. OR results are as reported in ~\citet{zhang2023let}, while we evaluate all H \& SSL baselines on our synthetic datasets using identical data splits across multiple seeds. More information on our experimental setup is available in Appendix~\ref{sec:hyperparams}.

\method attains the most striking results for the maximum cut problem (MCut). In both BA-small and BA-large, we obtain the best results amongst all non-OR methods: On BA-small, we obtain a max-cut size of 727.09, a mere five short of the Gurobi solver. On BA-large, the results are arguably even more impressive: We not only attain the best result amongst non-OR methods, but also surpass the Gurobi solver by almost 45 (due to the fact that Gurobi is unable to reach an exact solution within the time constraint). The success of \method is further underlined by the fact that the inference time is fairly robust to the growing graph sizes in MCut, taking just 27 seconds to evaluate on 500 graph, whereas it takes the Gurobi solver more than an hour to do so. 

On the maximum clique (MClique) and minimum dominating set (MDS) benchmarks, \method remains very competitive. We surpass GFN by a small margin to attain the best result for MDS on large graphs, while outperforming the GNN-based solvers ScatteringClique,  Erd\H{o}s and Anneal (including the large variants) as well as the heuristic methods on all MClique and MDS benchmarks. The iterative GFlowNet algorithm proves particularly useful for MClique, and represents the best SSL method for both MClique experiments as well as MDS-BA-small. We should point out that further gains for our algorithm may be possible here by integrating the optimization framework proposed in Anneal~\citep{sun2022annealed} into our framework the meantime, by outperforming our main baselines  Erd\H{o}s and ScatteringClique on all benchmarks tested on, we demonstrate the power of \method as a model framework for a wide array of graph-CO problems.

Finally, we perform an ablation study on the message-passing layers in order to demonstrate the architectural gains of the \method layer, presented in Table~\ref{tab:method-comparison-mp}. We compare our \method with a variant \emph{without} the decoupled filter bank as well as ScatteringClique and two well-established baseline MPNNs, GCN~\citep{kipf2016semi} and GIN~\citep{xu2018powerful}, the latter of which forms the backbone of the  Erd\H{o}s' GNN architecture. We use identical depth, width, and training procedure for all models. GCON provides a clear improvement of about three cut-edges on average compared to ScatteringClique, and more than 35 compared to the best MPNN baseline (GIN) thanks to our hybrid layer. Decoupling the filter bank provides an additional two-edge improvement on average. We note that these gains are further amplified on the large counterparts.
Similar to the  Erd\H{o}s' GNN, we use skip-connections and batch normalization between layers and perform minimal hyperparameter tuning on several components like the number of layers, layer normalization, and the use of skip-connections within the hybrid layer. See  Appendix~\ref{sec:hyperparams} for further details on hyperparameter tuning.

\section{Conclusion}\label{sec:conclusion}
We have introduced \method, a novel GNN that uses a sophisticated pair of decoupled filter banks and a localized attention mechanism for solving several well-known CO problems.
\method can be trained using self-supervised loss functions, which allow us to estimate the probability $\bp(v)$ that each node is in the set of interest and then determine our solution by utilizing rule-based decoders. We then demonstrate the effectiveness of our method compared to a variety of common GNNs and other SSL methods, obtaining leading performance for the max-cut problem. We note that in the future, it would be interesting to (a) combine our architectural improvements with better optimization frameworks to leverage the power of our model better, and (b) extend \method to other CO problems such as MIS and graph coloring to further validate its efficacy. 

\paragraph{Limitations and future work.}

We found our framework to be more difficult to optimize well on larger datasets. This insight corroborates our understanding that the self-supervised loss functions associated with graph-CO problems are highly non-convex and typically challenging to optimize. This motivates future work that incorporates the annealed optimization framework from \citet{sun2022annealed} into \method for superior performance. Additionally, the primary challenge regarding extending \method to other graph-CO problems would be to develop appropriate self-supervised losses or efficient methods of producing labeled data to allow for supervised learning approaches.

\section*{Acknowledgements and Disclosure of Funding}

This work was was partially funded by the Fin-ML CREATE graduate studies scholarship for PhD, the J.A. DeSève scholarship for PhD and Guy Wolf’s research funds [Frederik Wenkel]; Bourse en intelligence artificielle des Études supérieures et postdoctorales (ESP) 2023-2024
[Semih Cantürk]; Natural Sciences and Engineering Research Council of Canada (NSERC) CGS D 569345 - 2022 scholarship [Stefan Horoi]; NSF OIA 2242769 [Michael Perlmutter]; Canada CIFAR AI Chair, IVADO (Institut de valorisation des données) grant PRF-2019-3583139727, FRQNT (Fonds de recherche du Québec - Nature et technologies) grant 299376 and NSERC Discovery grant 03267 [Guy Wolf]; NSF DMS grant 2327211 [Michael Perlmutter and Guy Wolf]. This research was also enabled in part by compute resources provided by Mila (mila.quebec). The content provided here is solely the responsibility of the authors and does not necessarily represent the official views of the funding agencies.

\bibliographystyle{plainnat}
\bibliography{references}


\appendix
\newpage

Our appendix is organized as follows. In Appendix \ref{app: theory}, we provide further details on the baseline architecture \citet{min2022can} and the proof of Theorem \ref{thm:motivation}. In Appendix~\ref{sec:hyperparams}, we give further details on our experimental setup and hyperparameters. 
In Appendices~\ref{sec:timing} and \ref{sec:generalization}, we conduct and discuss further experiments related to timing and model generalization respectively. Appendix~\ref{sec:mip} presents the MIP formulations for the graph-CO tasks that were employed in Gurobi. Finally, in Appendix~\ref{sec:greedy_heuristics} we provide the greedy heuristic algorithms used as baselines.

\section{Further Theoretical Analysis}\label{app: theory}

\subsection{Further details on the baseline architecture \citep{min2022can}}\label{sec: original}

The GNN-based method most closely related to our work is \citet{min2022can}, which also utilized a complex filter bank consisting of low-pass and band-pass filters. As alluded to in Section~\ref{sec: Attention over filters}, our method builds on \citet{min2022can} in certain ways.
(i) We utilize a learnable function $\mlp$ in our filtering operations; (ii) we use separate attention mechanisms for the aggregation operations $\mathcal{F}_A$ and the comparison operations $\mathcal{F}_C$; and (iii) our layer-wise update rule takes a different form. To clarify these differences, below, we give a more detailed recollection of the architecture from  \citet{min2022can}. 

 In \citet{min2022can}, similar to our method, the input features $\bX^{\ell-1}$ of the $\ell^{th}$ layer were filtered by each filter from a filter bank $\mathcal{F}$ that decomposes into low-pass and band-pass filters $\mathcal{F}=\mathcal{F}_\low\sqcup\mathcal{F}_\band$. The filter responses were also derived according to Equation~\ref{eq:low-k} and~\ref{eq:band}. However, in \citet{min2022can}, they use a simple, non-learned function $\mlp(.)$ that is just an activation function $\mlp=\sigma$. By contrast, we use learned functions. 

Next, \citet{min2022can} used an attention mechanism to determine the importance of information of different filters for each individual node. For each filter response $\bH_f\coloneqq F(\bX)\in \R^{n\times d}$ of $F\in \mathcal{F}$, and input features $\bX$, this takes the form
\begin{equation*}
    \bs_f \coloneqq \sigma\left( [\bX \Vert \bH_f] \ba \right) \in \R^{n\times 1},
\end{equation*}
where $\Vert$ denotes horizontal concatenations. However, differing from our method, when computing the normalized attention scores the softmax is applied over the entire filter bank. This results in a single set of normalized attention scores given by $\bar\bs_f\in \R^n$ where
\begin{equation*}
    \bar\bs_f(v) = \text{softmax}\left(\{\bs_f(v): f\in \mathcal{F}\}\right)
\end{equation*}
whereas in \method, we compute separate normalized attention scores $\bar{\bs}_f^A$ and $\bar{\bs}_f^C$ for the aggregation and comparison operators.
Next, the filter responses are re-weighted according to the attention scores, yielding $\bH_{\mathcal{F}}\in\R^{n\times d}$, where
\begin{equation*}
    \textstyle
    \bH_{\mathcal{F}} = \sum_{f\in \mathcal{F}} \left(\bar\bs_f \circ \1_d\right) \odot \bH_f,
\end{equation*}
where, as in Section~\ref{sec: Attention over filters},
 $\circ$ and $\odot$ denote the outer and Hadamard product, respectively.
The final output of each block is then generated by simply applying an MLP to the $\bH_{\mathcal{F}}$, i.e.,
\begin{equation}\label{eq:att-4}
    \bX^\ell \coloneqq \text{MLP}(\bH_{\mathcal{F}}),
\end{equation}
instead of the update rule considered in Equation~\ref{eq:att-4-}, which adds $\mathbf{H}_{\mathcal{F}_A}$
 and $\mathbf{H}_{\mathcal{F}_C}$ to $\bX^{\ell-1}$ before applying the MLP.

\subsection{Proof of Theorem~\ref{thm:motivation}}\label{sec:proof}

 We start by stating a copy of Theorem \ref{thm:motivation} presented in Section~\ref{sec:motivation} of the main text.

\vspace{4pt}
\begin{thm*}\label{thm:motivation copy}
    Consider the non-decoupled architecture and a $c$-band-dominant representation $\bx \mapsto (\bar\bs_f, \bh_f)_{f\in\mathcal{F}}$ at a node $v\in V$ for a non-negative  input signal $\bx\geq 0$.
    Then, if all filters have sufficiently large scales, the low-pass filter responses will always dominate the band-pass filter responses in the sense that
\begin{equation}\label{eq:ineq}
        \sum_{f\in\mathcal{F}_\band} \bar\bs_f[v] \bh_f[v] \leq \epsilon c \sum_{f\in\mathcal{F}_\low} \bar\bs_f[v]\bh_f[v],
    \end{equation}
    where $\epsilon>0$ can be made arbitrarily small if the scales of the filters are sufficiently large.
\end{thm*}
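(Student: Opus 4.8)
The plan is to exploit the ergodicity of the lazy random walk: at large scales every low-pass filter reports (almost) the same strictly positive value at $v$, whereas every band-pass filter is a \emph{difference} of two such nearly equal values and hence reports something that vanishes. Granting this, \eqref{eq:ineq} will drop out of the definition of $c$-band-dominance by a one-line estimate.

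First I would establish the mixing fact. Since $\1^\top\bP=\1^\top$ and $\bP(\bD\1)=\bD\1$, the matrix $\bP$ is column-stochastic with Perron eigenvector $\bD\1$; as $G$ is connected and $\bP$ is lazy, the associated chain is irreducible and aperiodic, so $\bP^k\to\Pi:=(\bD\1)\1^\top/(2|E|)$ as $k\to\infty$. (The convergence is geometric at rate $\lambda_2(\bP)<1$: $\bP$ is conjugate to the symmetric matrix $\tfrac12(\I+\bD^{-1/2}\bA\bD^{-1/2})$, whose spectrum lies in $[0,1]$ with eigenvalue $1$ simple.) Consequently, for the nonnegative signal $\bx$ I get $\bP^k\bx\to\tfrac{\|\bx\|_1}{2|E|}\,\bD\1$ entrywise, so $(\bP^k\bx)[v]\to\mu_v:=\|\bx\|_1\,d_v/(2|E|)$, and $\mu_v>0$ unless $\bx=0$ (in which case both sides of \eqref{eq:ineq} are $0$ and there is nothing to prove).

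Next I would translate this into statements about filter responses. For a band-pass filter $f=F_{k_1,k_2}$ the pre-activation at $v$ is $(\bP^{k_1}\bx)[v]-(\bP^{k_2}\bx)[v]\to\mu_v-\mu_v=0$; since in the non-decoupled architecture $\mlp=\sigma$ with $\sigma$ continuous and $\sigma(0)=0$ (e.g.\ $\mathrm{ReLU}$, as in \citet{min2022can}), the response $\bh_f[v]\to 0$, and because $\mathcal{F}_\band$ is finite I can make all scales large enough that $0\le\bh_f[v]\le\delta$ for every $f\in\mathcal{F}_\band$, for any prescribed $\delta>0$. For a low-pass filter $f=F_k$ with $k$ large, $(\bP^k\bx)[v]\ge\mu_v/2$, hence $\bh_f[v]\ge m:=\sigma(\mu_v/2)>0$, a constant independent of the scales; and $\bh_f[v]\ge 0$ for all $f\in\mathcal{F}_\low$ because $\bP^k\bx\ge 0$. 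Writing $M:=\max\{\bar\bs_f[v]:f\in\mathcal{F}_\low\}$ and using $c$-band-dominance, $\max\{\bar\bs_f[v]:f\in\mathcal{F}_\band\}=cM$, I then bound
\[
\sum_{f\in\mathcal{F}_\band}\bar\bs_f[v]\,\bh_f[v]\;\le\;cM\,|\mathcal{F}_\band|\,\delta\qquad\text{and}\qquad\sum_{f\in\mathcal{F}_\low}\bar\bs_f[v]\,\bh_f[v]\;\ge\;M\,m
\]
(the second bound keeping only the maximizing low-pass term and discarding the other nonnegative summands), and divide to obtain \eqref{eq:ineq} with $\epsilon:=|\mathcal{F}_\band|\,\delta/m$, which tends to $0$ as the scales grow.

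The main obstacle I anticipate is not the algebra but making the large-scale regime \emph{uniform}: I need the low-pass responses to stay bounded below by a single constant $m$ at the same time as \emph{all} band-pass responses are driven below a common $\delta$. This is precisely what the spectral gap $1-\lambda_2(\bP)>0$ of the lazy walk buys, and it is where connectedness and laziness of $G$ enter. The argument also leans on the mild hypotheses $\sigma(0)=0$ and $\sigma>0$ on $(0,\infty)$, matching the activation used in \citet{min2022can}; a fully quantitative version would track $\epsilon$ explicitly in terms of $\lambda_2(\bP)$, $\mu_v$, and the filter scales.
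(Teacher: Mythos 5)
Your proposal is correct and follows essentially the same route as the paper's proof: use convergence of the lazy random walk to its stationary distribution so that, at large scales, every low-pass response at $v$ is near the positive constant $\|\bx\|_1 d_v/\|\bd\|_1$ while every band-pass response is below $\delta$, then invoke the $c$-band-dominance relation between the maximal attention scores to bound the band-pass sum by $|\mathcal{F}_\band|\cdot(\text{max score})\cdot\delta$ and the low-pass sum from below by its single maximizing term, and let $\delta\downarrow 0$. Your treatment is in fact a bit more explicit than the paper's (spectral-gap uniformity and the hypotheses $\sigma(0)=0$, $\sigma>0$ on $(0,\infty)$ for the activation, which the paper glosses over), but the argument is the same.
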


\begin{proof}
    
    
Let $\epsilon>0$, let $v\in V$ be fixed and set $p_\infty \coloneqq \Vert \bx\Vert_1\bd[v]/\Vert\bd\Vert_1$, where $\bd\in\R^n$ is the degree vector. As $k\rightarrow\infty$, it is known that $ \bP^k(\bx/\|\mathbf{x}\|_1)$ converges to the stationary distribution $\bd/\Vert\bd\Vert_1$, which implies that $ F_k(\mathbf{x})=\bP^k\bx$ converges to $\Vert \bx\Vert_1\bd/\Vert\bd\Vert_1$. This in turn implies that $\lim_{k_1,k_2\rightarrow\infty}F_{k_1,k_2}(\mathbf{x})=0$.

Therefore, 
     for all $\delta>0$, there exists some $K\coloneqq K(\delta)\in\N$ such that if all of the filters in $f\in\mathcal{F}_\low$ have the form $f=F_k$, $k\geq K$, and if all of the filters in $f\in\mathcal{F}_\band$ have the form $f=F_{k_1,k_2},$ $k_1,k_2\geq K$, then we have that 
     $$
     \vert\mathbf{h}_f[v]-p_\infty\vert\leq \delta \text{ for all } f\in\mathcal{F}_\low
     $$
and     
     $$
     \vert\mathbf{h}_f[v]\vert\leq \delta \text{ for all } f\in\mathcal{F}_\band.
     $$
     
    

       We next 
    set $\tau \coloneqq \max \{\bar\bs_f[v]: f\in\mathcal{F}_\band\}$, so that the left-hand-side of Equation~\ref{eq:ineq} can be estimated from above as
    $$
        \textstyle\text{LHS} = \sum_{f\in\mathcal{F}_\band} \bar\bs_f[v] \bh_f[v]\leq \sum_{f\in\mathcal{F}_\band}\tau\delta = \vert \mathcal{F}_\band\vert\tau\delta.
    $$
    At the same time, if we let $f_\low^{\max}$ denote the filter from $\mathcal{F}_\low$ corresponding to the largest value of $\bar{\mathbf{s}}_f$, we can recall the definition of a $c$-band dominate representation to estimate the right-hand-side from below by
    $$
   \text{RHS} \geq \epsilon \cdot c \cdot \bar\bs_{f_\low^{\max}} \mathbf{h}_{f_{\low}^{\max}}[v] \geq \epsilon \cdot c \cdot \frac{\tau}{c} \left(p_\infty - \delta\right)= \epsilon \cdot \tau \left(p_\infty - \delta\right). 
    $$
    
    Hence, Equation~\ref{eq:ineq} will be true if
    \begin{equation}\label{eqn: sufficient}
          \vert \mathcal{F}_\band\vert \leq \frac{p_\infty - \delta}{\delta}\epsilon.
    \end{equation}
    We now recall that for filters with sufficiently large scales (i.e., for sufficiently large $K$), $\delta$ can be arbitrarily small such that the inequality will hold. To confirm this, we note that $g(\delta) \coloneqq (p_\infty - \delta) / \delta$ is strictly increasing when decreasing $0<\delta<p_\infty$ and tends to infinity as $\delta\downarrow 0$. Thus, for sufficiently large $K$ we can choose $\delta$ small enough such that Equation~\ref{eqn: sufficient} holds.
\end{proof}

\newpage
\section{Experimental setup \& hyperparameters}\label{sec:hyperparams}

Our experimental framework is built on PyTorch Geometric~\citep{pyg} and GraphGym~\citep{you2020design}. All experiments are conducted using a single GPU and 4 CPUs; most experiments (e.g. MClique experiments on -small graphs) ran within several minutes to an hour, while the largest (e.g. MDS on BA-large using 16-layer-256-width \method) took several hours to converge.

We perform all timing experiments in Tables~\ref{tab:method-comparison-s} and~\ref{tab:method-comparison-l} following the setup in~\citet{zhang2023let}, using an NVIDIA V100 GPU for fair comparison of the inference time estimates on 500 samples. We report worst-case results, i.e., use a batch size of 1; assuming that \citet{zhang2023let} have done so despite not listing an explicit batch size for the timing experiments. Additional timing experiments comparing the \method layer with other baselines can be found in Appendix~\ref{sec:timing}.

As mentioned, the self-supervised loss functions associated with combinatorial graph problems are difficult to optimize well, due to their highly non-convex nature. We thus conducted hyperparameter optimization on several main components of our model. For each task, a separate set of hyperparameter tuning experiments were conducted for the number of layers, layer width, and activation, as well as layer normalization for layers and learning rate where appropriate. 

In addition to the configurations listed in Table~\ref{tab:hyperparams}, all tasks used Adam optimizer~\citep{adam} and a cosine annealing scheduler~\citep{cosine_annealing} with a 5-epoch warm-up period. Other hyperparameters shared by all tasks include the use of batch normalization and dropout with 0.3 probability.

Below in Table~\ref{tab:hyperparams}, GSN refers to graph size normalization where features are divided by the graph size after each layer, as per \citet{karalias2020erdos}. L2 refers to L2 normalization. Additionally, two different skip connection strategies are considered, denoted \emph{skipsum} and \emph{stack-concat}. Skipsum refers to summing the output of the previous layer to the output features; in stack-concat no actual skip connections are used, but all intermediate \method layer outputs are concatenated after the final GNN layer and passed to the post-GNN fully-connected layers accordingly. We found that stack-concat performs better overall, but skipsum is particularly helpful when the GNN inner dimension is large.

\begin{table}[!htbp]
\centering
\caption{Overview of model hyperparameters associated with the reported results for each dataset.}
\resizebox{\textwidth}{!}{
\begin{tabular}{lcccccc} 
\toprule
\textbf{}                 & \textbf{MCut}     & \textbf{MCut}     & \textbf{MClique}       & \textbf{MClique}       & \textbf{MDS}      & \textbf{MDS}       \\
\textbf{}                 & \textbf{BA-small} & \textbf{BA-large} & \textbf{RB-small} & \textbf{RB-large} & \textbf{BA-small} & \textbf{BA-large}  \\ 
\midrule
\textbf{\# Pre-GNN layers}        & 1                & 4                & 1                 & 1                 & 1                & 1                 \\
\textbf{\# GNN (Hybrid) layers}        & 16                & 16                & 20                 & 20                 & 16                & 16                 \\
\textbf{\# Post-GNN layers}        & 1                & 1                & 2                 & 2                 & 1                & 1                 \\
\textbf{Hybrid layer width}      & 32               & 32               & 32               & 32               & 256               & 256                \\
\textbf{Hybrid layer normalization} & None               & L2               & GSN                & GSN               & L2               & L2                \\
\textbf{Hybrid layer activation}       & ELU               & ELU               & GELU              & GELU              & GELU              & GELU               \\
\textbf{MLP activation}       & LReLU (0.3)               & LReLU (0.3)               & LReLU (0.01)              & LReLU (0.01)              & LReLU (0.3)              & GELU               \\
\textbf{Skip connection}       & stack-concat               & skipsum               & stack-concat              & stack-concat              & stack-concat              & skipsum               \\
\textbf{Skip conn. in Hybrid layer}       & Yes               & Yes               & Yes              & Yes              & No              & No               \\
\textbf{Learning rate}               & 1e-3              & 3e-3              & 1e-3              & 1e-3              & 3e-3              & 3e-3               \\
\textbf{\# Epochs}        & 200               & 400               & 100                & 100                & 200               & 200                \\
\textbf{\# Batch size}        & 256               & 256               & 8                & 8                & 256               & 256                \\
\textbf{Decoder $K$}        & N/A               & N/A               & 10                & 10                & 1               & 1                \\
\bottomrule
\end{tabular}}
\label{tab:hyperparams}
\end{table}

\paragraph{Further discussion on benchmarks}
Our main benchmarking pipeline (Tables~\ref{tab:method-comparison-s} and~\ref{tab:method-comparison-l}) parallel \citet{zhang2023let} to a considerable extent in that we evaluate GCON on the same tasks (MCut, MClique and MDS), using datasets drawn from identical distributions, i.e. RB \citep{rb_graphs} and BA \citep{ba_graphs} graphs generated with identical parameters; we also use their GFN model and several of the baselines from their study as methods to compare against in our study. Gurobi results in Tables~\ref{tab:method-comparison-s} and~\ref{tab:method-comparison-l} are as reported in ~\citet{zhang2023let}, while we evaluate all H \& SSL baselines on our synthetic datasets using identical data splits across multiple seeds. We adapted the Greedy and MFA methods from Anneal~\citep{sun2022annealed} for MClique and MDS, while implementing them ourselves for MCut, since it was a problem not considered in \citet{sun2022annealed}.

We additionally provide a brief discussion on differences between our results and those in \citet{zhang2023let} to allow for better interpretability of our results. 

\begin{itemize}
    \item We observed that model depth is particularly important for GNN-based methods on MCut and MDS, as we see our deeper GNN variants Erd\H{o}s-large and Anneal-large outperform their shallower counterparts significantly. We conjecture that \citet{zhang2023let} did not evaluate on deeper variants of ERD\H{O}S/ANNEAL on MCut, which may explain why these methods underperform in their paper. On the other hand, MClique did not benefit substantially from model depth, where the large and base models performed very similarly. Shallow models particularly struggled on MDS, and exhibited extreme variance across runs.
    \item Annealing was less helpful in our case than suggested in \citet{sun2022annealed} and \citet{zhang2023let} on MClique. This is likely due to the fact that we were able to optimize Erd\H{o}s' GNN better, as both Erd\H{o}s and Anneal results converged to values similar to those corresponding to Anneal in \citet{zhang2023let}. We also noted that Erd\H{o}s' and Anneal were difficult to tune for MDS in general, and fell well short of the results presented in \citet{zhang2023let}. This applied to MClique as well, albeit to a lesser extent. We believe their implementation of Erd\H{o}s' GNN is based on \citet{sun2022annealed}, which diverge significantly from that of \citet{karalias2020erdos}, e.g. does not use masking. For consistency across all tasks, we modeled our implementation on \citet{karalias2020erdos}; but note that the \citet{sun2022annealed} implementation performed marginally better for MClique (though still significantly worse than \method).
    \item Heuristic results were very consistent with those in \citet{zhang2023let} for MClique and MDS, but our MFA implementation for MCut was seemingly more powerful than theirs -- while the Greedy results were similar (with a 0.5\% drop on BA-small and 0.7\% rise on BA-large), our MFA results improved over theirs by 2.2\% and 3.4\% respectively. While we do not think these differences make any substantial difference, we still report them for completeness, particularly as we do not have access to the exact implementations nor parameters used in their work.
\end{itemize}


\newpage
\section{Timing studies}\label{sec:timing}

We also performed a set of timing experiments to provide an overview of the scalability of the \method layer compared to ScatteringClique and GCN, two GNN layers benchmarked against in this study. Using the configurations denoted in Table~\ref{tab:hyperparams}, we ran our framework on 500 test graphs for each dataset. We see that even when the considered datasets are identical, as in the case of MCut and MDS (both tasks use the BA datasets), both the duration and scalability of the test runs are largely dependent on the task-specific decoder: The MCut decoder is faster than the MDS decoder across all three models; perhaps more remarkably, while the MCut decoder scales very well from BA-small to BA-large, the runtimes for the MDS decoder increase by 1000+\% for all models between the two datasets. This implies that for GNN-based neural solvers, larger efficiency gains are likely to be made by focusing on optimizing the decoder architectures than the GNN layers themselves.

Decoder scalability aside, our \method layer maintains similar runtimes with ScatteringClique and GCN across all experiments, indicating the overall scalability of the \method model. We note that there is \emph{some} overhead associated with our filters and attention mechanism such that GCN remains the fastest model in most cases, one exception is MDS on BA-large where our layer is faster than both baselines.

\begin{table}[!htbp]
\centering
\caption{Timing comparison of \method, ScatteringClique and GCN on Small and Large datasets for the MCut and MClique tasks. The times are based on test datasets of 500 graphs.}
\begin{tabular}{lccr} 
\toprule
\textbf{Model}                     & \textbf{Small}     & \textbf{Large}       & \textbf{\% Increase (s)}       \\
\midrule
       & \textbf{MCut -- BA} & \textbf{MCut -- BA} & \\ 
\midrule
\method                & 0:22           & 0:27           & 22.73             \\
ScatteringClique               & 0:18           & 0:19           & 5.56             \\ 
GCN               & 0:14           & 0:14           & 0.00             \\ 
\midrule
       & \textbf{MDS -- BA} & \textbf{MDS -- BA} & \\ 
\midrule
\method                & 1:15           & 11:24           & 812.00             \\
ScatteringClique               & 1:11           & 14:52           & 1156.34             \\ 
GCN               &  1:14          & 13:36           & 1002.70             \\ 
\midrule
       & \textbf{MClique -- RB} & \textbf{MClique -- RB} & \\ 
\midrule
\method                & 4:09           & 4:57           & 19.28             \\
ScatteringClique               & 4:06           & 4:21           & 6.10             \\ 
GCN               & 3:13           & 4:04           & 26.42             \\ 
\bottomrule
\end{tabular}
\label{tab:timing_exps}
\end{table}

\newpage

\section{Generalization study}\label{sec:generalization}

We also present a brief study on the transferability of our \method architecture. As every graph-CO problem in our study has a corresponding \emph{small} and \emph{large} benchmark dataset, we trained a \method model on each, and tested the respective models on the \emph{other} dataset (e.g. for MCut, we trained a \method on BA-small and tested it on BA-large, and vice versa), and measured the percentage change in performance compared to the in-distribution case (e.g. \method both trained and tested on BA-small). We repeated the study for GFN and Erd\H{o}s-large as well for comparison. Our results are presented in Table~\ref{tab:generalization}, while Table~\ref{tab:agg_generalization} shows average changes across several groupings such as type of generalization and generalization per graph-CO problem.

Both \method and GFN prove to be quite transferable across all our benchmarks, with less than 6\% average drop in performance for \method and only a 2\% drop for GFN. Nevertheless, the generalizability patterns of the two methods are considerably different. \method is consistently better when generalizing from \emph{small} to \emph{large} datasets while it struggles more in the opposite direction; these trends are completely reversed for GFN, which does not exhibit any negative transfer when generalizing from \emph{large} to \emph{small}. When looking at individual CO problems, \method is particularly robust on MCut in both directions while GFN is more robust for MClique and MDS. Erd\H{o}s-large is on par with both methods on MCut and MClique, but is vastly inferior with huge performance drops in MDS. We note that despite the marginal drops, \method still comfortably outperforms most in-distribution baselines even in the transfer learning setting.

\begin{table}[!ht]
    \centering
    \small
    \caption{Generalization results between \emph{small} and \emph{large} graphs for \method and GFN across all tasks. Average of three runs listed.}
    \begin{tabular}{ccccccc}
    \toprule
        \textbf{Task} & \textbf{Model} & \textbf{Train Data} & \textbf{Test Data} & \textbf{Base Perf.} & \textbf{Transfer Perf.} & \textbf{\% Change} \\ \midrule
        MCut & GFN & BA-large & BA-small & 700.23 & 704.99 & \cellcolor[HTML]{e4f4e8}0.68\% \\
        MCut & GCON & BA-large & BA-small & 727.09 & 723.45 & \cellcolor[HTML]{fefbfb}-0.50\%\\
        MCut & ERD\H{O}S-large & BA-large & BA-small & 720.04 & 664.82 & \cellcolor[HTML]{f1c2c2}-7.67\% \\ \midrule
        MCut & GFN & BA-small & BA-large & 2826.64 & 2569.44 & \cellcolor[HTML]{f1c2c2}-9.10\% \\
        MCut & GCON & BA-small & BA-large & 2961.19 & 2951.18 &\cellcolor[HTML]{fefbfb} -0.34\% \\
        MCut & ERD\H{O}S-large & BA-small & BA-large & 2881.48 & 2895.51 & \cellcolor[HTML]{e4f4e8}0.49\% \\ \midrule
        MClique & GFN & RB-large & RB-small & 16.22 & 16.26 & \cellcolor[HTML]{f5fbf7}0.25\% \\ 
        MClique & GCON & RB-large & RB-small & 15.87 & 13.62 & \cellcolor[HTML]{eaa1a0}-14.17\% \\
        MClique &ERD\H{O}S-large &RB-large &RB-small &13.21 &13.00 &\cellcolor[HTML]{f9edeb}-1.56\%\\ \midrule
        MClique & GFN & RB-small & RB-large & 31.73 & 30.16 & \cellcolor[HTML]{f8dede}-4.95\% \\
        MClique & GCON & RB-small & RB-large & 29.46 & 28.57 & \cellcolor[HTML]{faeaea}-3.01\% \\
        MClique & ERD\H{O}S-large & RB-small & RB-large & 23.91 & 26.27 & \cellcolor[HTML]{58a65c}\textcolor{white}{9.85\%}
        \\ \midrule
        MDS & GFN & BA-large & BA-small & 29.14 & 29.13 & 0.02\% \\
        MDS & GCON & BA-large & BA-small & 30.26 & 33.89 & \cellcolor[HTML]{edafae}-11.99\% \\ 
        MDS & ERD\H{O}S-large & BA-large & BA-small & 32.48 & 46.82 & \cellcolor[HTML]{d96255}\textcolor{white}{-44.14\%} \\ \midrule
        MDS & GFN & BA-small & BA-large & 113.77 & 112.89 & \cellcolor[HTML]{e0f2e5}0.77\% \\
        MDS & GCON & BA-small & BA-large & 113.47 & 118.70 & \cellcolor[HTML]{f8dede}-4.60\% \\ 
        MDS & ERD\H{O}S-large & BA-small & BA-large	& 134.39 & 362.68 & \cellcolor[HTML]{d96255}\textcolor{white}{-169.87\%} \\ \bottomrule
    \end{tabular}
    \label{tab:generalization}
\end{table}

\begin{table}[!htb]
    \centering
    \caption{Aggregated \% performance changes reported in Table~\ref{tab:generalization} for \method and GFN.}
    \begin{tabular}{cccc}
    \toprule
        \textbf{Group} & \textbf{GFN} & \textbf{GCON} & \textbf{ERD\H{O}S-large} \\ \midrule
        Overall &\cellcolor[HTML]{f9edeb} -2.05\% &\cellcolor[HTML]{f8dede} -5.77\% & \cellcolor[HTML]{d96255} \textcolor{white}{-35.48\%}\\ \midrule
        Small $\rightarrow$ Large &\cellcolor[HTML]{f2dbd8} -4.42\% & \cellcolor[HTML]{f9edeb}-2.65\% & \cellcolor[HTML]{d96255} \textcolor{white}{-53.18\%} \\ 
        Large $\rightarrow$ Small &\cellcolor[HTML]{f6fbf8} 0.32\% & \cellcolor[HTML]{f1c2c2} -8.89\% & \cellcolor[HTML]{eaa1a0} -17.79\%\\ \midrule
        MCut &\cellcolor[HTML]{f2dbd8} -4.21\% &\cellcolor[HTML]{fefbfb} -0.42\% & \cellcolor[HTML]{faeaea}-3.59\%\\ 
        MClique & \cellcolor[HTML]{f9edeb}-2.35\% &\cellcolor[HTML]{f1c2c2} -8.59\% & \cellcolor[HTML]{88c5a1}4.15\%\\ 
        MDS &\cellcolor[HTML]{f6fbf8} 0.40\% &\cellcolor[HTML]{f1c2c2} -8.30\% & \cellcolor[HTML]{d96255}\textcolor{white}{-107.01\%} \\ \bottomrule
    \end{tabular}
    \label{tab:agg_generalization}
\end{table}

\newpage
\section{MIP formulations}\label{sec:mip}
\subsection*{MCut}
$$
\begin{array}{ll}
\text { Maximize } & \sum_{(i, j) \in E} x_i + x_j - 2 x_i x_j \\
\text { subject to } & x_i \in \{ 0,1 \} \quad \forall i \in V
\end{array}
$$
\subsection*{MClique}
$$
\begin{array}{ll}
\text { Maximize } & \sum_{i \in V} x_i \\
\text { subject to } & x_i+x_j \leq 1 \quad \forall(i, j) \notin E, \\
& x_i \in\{0,1\} \quad \forall i \in V
\end{array}
$$
\subsection*{MDS}
$$
\begin{array}{ll}
\text { Minimize } & \sum_{i \in V} x_i \\
\text { subject to } & x_i+ \sum_{j \in \mathcal{N}(i)} x_j \geq 1 \quad \forall i \in V, \\
& x_i \in\{0,1\} \quad \forall i \in V
\end{array}
$$

\section{Greedy heuristic algorithms}\label{sec:greedy_heuristics}

\begin{algorithm}
\caption{Heuristic algorithm for MCut, inspired by \citet{kernighan_lin}}
\begin{algorithmic}[1]
    \Function{MCutHeuristic}{$V, E$} \Comment{Input: vertices $V$, edges $E$; Output: set of cut edges $S$}
        \State $(S_1, S_2) \gets \Call{RandomPartition}{V}$ \Comment{Randomly partition vertices into two sets}
        \ForEach {$v \in V$} \Comment{Iteratively optimize the partition}
            \State $c_1 \gets \Call{CountEdges}{v, S_1, E}$ \Comment{Edges from $v$ to nodes in $S_1$}
            \State $c_2 \gets \Call{CountEdges}{v, S_2, E}$ \Comment{Edges from $v$ to nodes in $S_2$}
            \If {$v \in S_1 \textbf{ and } c_1 > c_2$} \Comment{Move $v$ to $S_2$ if it improves the cut}
                \State \Call{Move}{$v, S_1, S_2$}
            \ElsIf {$v \in S_2 \textbf{ and } c_2 > c_1$} \Comment{Move $v$ to $S_1$ if it improves the cut}
                \State \Call{Move}{$v, S_2, S_1$}
            \EndIf
        \EndFor
        \State $S \gets \{\}$ \Comment{Initialize the set of cut edges}
        \ForEach {$\{v_i, v_j\} \in E$} \Comment{Identify edges crossing the cut}
            \If {($v_i \in S_1 \textbf{ and } v_j \in S_2$) \textbf{ or } ($v_i \in S_2 \textbf{ and } v_j \in S_1$)}
                \State \Call{Add}{$\{v_i, v_j\}, S$}
            \EndIf
        \EndFor
        \State \Return $S$ \Comment{Return the set of edges in the maximum cut}
    \EndFunction
\end{algorithmic}
\end{algorithm}

\begin{algorithm}
\caption{Heuristic algorithm for MClique \citep{mclique_heuristic}, implementation based on \citet{sun2022annealed}}
\begin{algorithmic}[1]
    \Function{MCliqueHeuristic}{$V, E$} \Comment{Input: vertices $V$, edges $E$; Output: clique $S$}
        \State $S \gets \{\}$ \Comment{Initialize the clique as an empty set}
        \State $vList \gets \Call{SortByDegree}{V, E, descending}$ \Comment{Sort vertices by descending degree}
        \For {$v_i \in vList$} \Comment{Iterate through vertices in sorted order}
            \If {$S = \{\}$} \Comment{If the clique is empty, start with the first vertex}
                \State $S \gets \{v_i\}$
            \EndIf
            \If {\Call{MaintainsClique}{$v_i, S, E$}} \Comment{Check if adding $v_i$ to $S$ maintains a valid clique}
                \State \Call{Add}{$v_i, S$} \Comment{Add $v_i$ to the clique}
            \EndIf
        \EndFor
        \State \Return $S$ \Comment{Return the maximum clique found}
    \EndFunction
\end{algorithmic}
\end{algorithm}

\begin{algorithm}
\caption{Heuristic algorithm for MDS, based on \citet{sun2022annealed}}
\begin{algorithmic}[1]
    \Function{MDSHeuristic}{$V, A$} \Comment{Input: vertices $V$, adjacency matrix $A$; Output: dominating set $S$}
        \State $S \gets \{\}$ \Comment{Initialize the dominating set as empty}
        \State $vList \gets \Call{SortByDegree}{V, A, descending}$ \Comment{Sort vertices by descending degree}
        
        \For {$v_i \in vList$} \Comment{Initialize probability of each node being in the dominating set}
            \State $p_i \gets 0.5$
        \EndFor
        \For {$v_i \in vList$} \Comment{Initialize probability of node being ``uncovered'' by the dominating set}
            \State $u_i \gets (1 - p_i) \cdot \prod_{j=1}^n \big(A_{ij} \cdot (1 - p_j) + (1 - A_{ij})\big)$
        \EndFor
        
        \For {$v_i \in vList$} \Comment{Iterate over sorted vertices}
            \State $potential \gets \frac{10}{1-p_i} \cdot \big(u_i + \sum_{j \in \mathcal{N}(i)} u_j\big)$ \Comment{Calculate potential of $v_i$}
            \If {$potential > 1$} \Comment{Add $v_i$ to the dominating set if potential exceeds threshold}
                \State \Call{Add}{$v_i, S$} \Comment{Add $v_i$ to dominating set $S$}
                \State $p_i \gets 1$ \Comment{Set probability of $v_i$ being in the dominating set to 1}
                \State $u_i \gets 0$ \Comment{Mark $v_i$ as covered}
                \State $u_j \gets 0 \quad \forall j \in \mathcal{N}(i)$ \Comment{Mark neighbors of $v_i$ as covered}
            \Else
                \State $p_i \gets 0$ \Comment{Set $v_i$ as non-dominant}
                \State $u_i \gets \frac{u_i}{1 - p_i}$ \Comment{Adjust uncover probability of $v_i$}
                \State $u_j \gets u_j \cdot (1 - A_{ij}) + A_{ij} \cdot \frac{u_j}{1 - p_i} \quad \forall j$ \Comment{Adjust neighbors' uncover probabilities}
            \EndIf
        \EndFor
        
        \State \Return $S$ \Comment{Return the dominating set}
    \EndFunction
\end{algorithmic}
\end{algorithm}

\end{document}